\documentclass[a4paper,11pt]{article}
\usepackage[margin=1in]{geometry}

\usepackage[utf8]{inputenc} 
\usepackage[T1]{fontenc}    
\usepackage{hyperref}
\usepackage{url}            
\usepackage{booktabs}       
\usepackage{amsfonts,amsmath,amsthm}       
\usepackage{nicefrac}       
\usepackage{microtype}      
\usepackage{xcolor}         
\usepackage{csquotes}
\usepackage{authblk}
\usepackage{natbib}
\usepackage{tcolorbox} 
\usepackage{graphicx}
\usepackage{wrapfig}
\usepackage{relsize}
\usepackage{color}
\usepackage{pict2e}
\usepackage{subcaption}
\usepackage{algorithm}
\usepackage[noend]{algorithmic}
\usepackage{caption}
\usepackage{nameref}
\usepackage{makecell}
\usepackage[font={small}]{caption}

\usepackage{enumitem}

\newcommand{\R}{\mathbb{R}}

\newtheorem{theorem}{Theorem}

\newtheorem{proposition}[theorem]{Proposition}

\newtheorem{definition}{Definition}

\newcommand{\tr}{\mathrm{tr}}

\newcommand{\vocab}{\mathcal{V}}

\newcommand{\EffDim}{d_\rho}

\title{\textbf{Geometric Signatures of Reasoning: A Spectral Perspective on Task Hardness}}
\author[1,*]{Aria Masoomi}
\author[1,*]{Mahsa Bazzaz}
\author[2,3]{Adel Javanmard}
\author[3]{Vahab Mirrokni}
\affil[1]{Northeastern University}
\affil[2]{University of Southern California}
\affil[3]{Google Research}
\date{}

\begin{document}

\def\thefootnote{$*$}\footnotetext{Equal contribution}
\maketitle
\begin{abstract}
Chain-of-thought (CoT) reasoning enables large language models (LLMs) to solve
complex problems by generating intermediate reasoning steps. While much attention
has been paid to the length and content of these reasoning chains, far less is known about
their internal geometry. We study the \emph{geometry} of CoT trajectories in the
hidden state space of transformer models, formalizing each reasoning chain as a
discrete curve in $\R^d$ and characterizing it through spectral, positional, and
kinematic geometric functionals.
We introduce the effective dimension $\EffDim$ as a measure of trajectory
complexity and show theoretically that trajectories with flatter eigenvalue spectra correspond to harder tasks, as they explore more of the hidden dimensions. Lastly, we explore how kinematic
features of the trajectory, mean position, positional dispersion, initial and
current hidden states, mean velocity, mean speed, and speed dispersion, can be used to predict
solution correctness before generation is complete, and may inform future early-stopping strategies.
Experimentally, on mathematical reasoning problems from the MATH500 dataset,
$\EffDim$ achieves $0.93$ AUC in distinguishing easy from hard problems, while
kinematic features potentially can predict correctness from only the first $20\%$
of generated tokens. These correctness signatures transfer across questions of
varying difficulty, establishing that the \emph{shape} of a model's internal
reasoning trajectory is a principled window into both task hardness and solution
quality.

\end{abstract}

\section{Introduction}
\label{sec:intro}

Large language models (LLMs) have demonstrated remarkable reasoning capabilities
through chain-of-thought (CoT) prompting, where models generate intermediate
reasoning steps before producing final answers~\citep{wei2022chain}. Recent systems
such as OpenAI's o1 and DeepSeek R1 have shown that scaling test-time
compute, allowing models to ``think longer'', can dramatically improve performance
on complex reasoning tasks~\citep{openai2024o1, deepseek2025r1}. At the same time,
it has been observed that simply increasing test-time computation can harm
performance, a phenomenon known as overthinking: reasoning length does not
directly convert to correct answers~\citep{su2025between}. In general, one
expects a model to engage in more deliberate reasoning for harder tasks and less for
easier ones. Recent theoretical work has further shown that, for transformers trained
on an in-context weight prediction task for linear regression, increasing test-time
compute can harm performance when the skills required to solve the downstream task
are insufficiently represented in the training data~\citep{javanmard2025understanding}.

Despite these theoretical and empirical advances, fundamental questions remain:
What makes a task hard for a general LLM? How does task difficulty affect the
model's internal representations, and can we predict it from those representations
alone? Can we identify promising reasoning paths early, before generation is
complete? These questions have profound practical implications: when generating $n$
candidate solutions to a problem (best-of-$n$ sampling), can we prioritize which
paths to pursue based on the geometry of their early trajectories?

In this paper, we address these questions by studying the geometry of
chain-of-thought trajectories in the hidden state space of transformer models. Our
key insight is that as an LLM generates tokens during reasoning, the sequence of
hidden states traces a discrete curve in $\R^d$, and the \emph{geometric
properties} of this curve encode information about both task difficulty and solution
quality.

\begin{figure}[t]
    \centering
    \includegraphics[width=\textwidth]{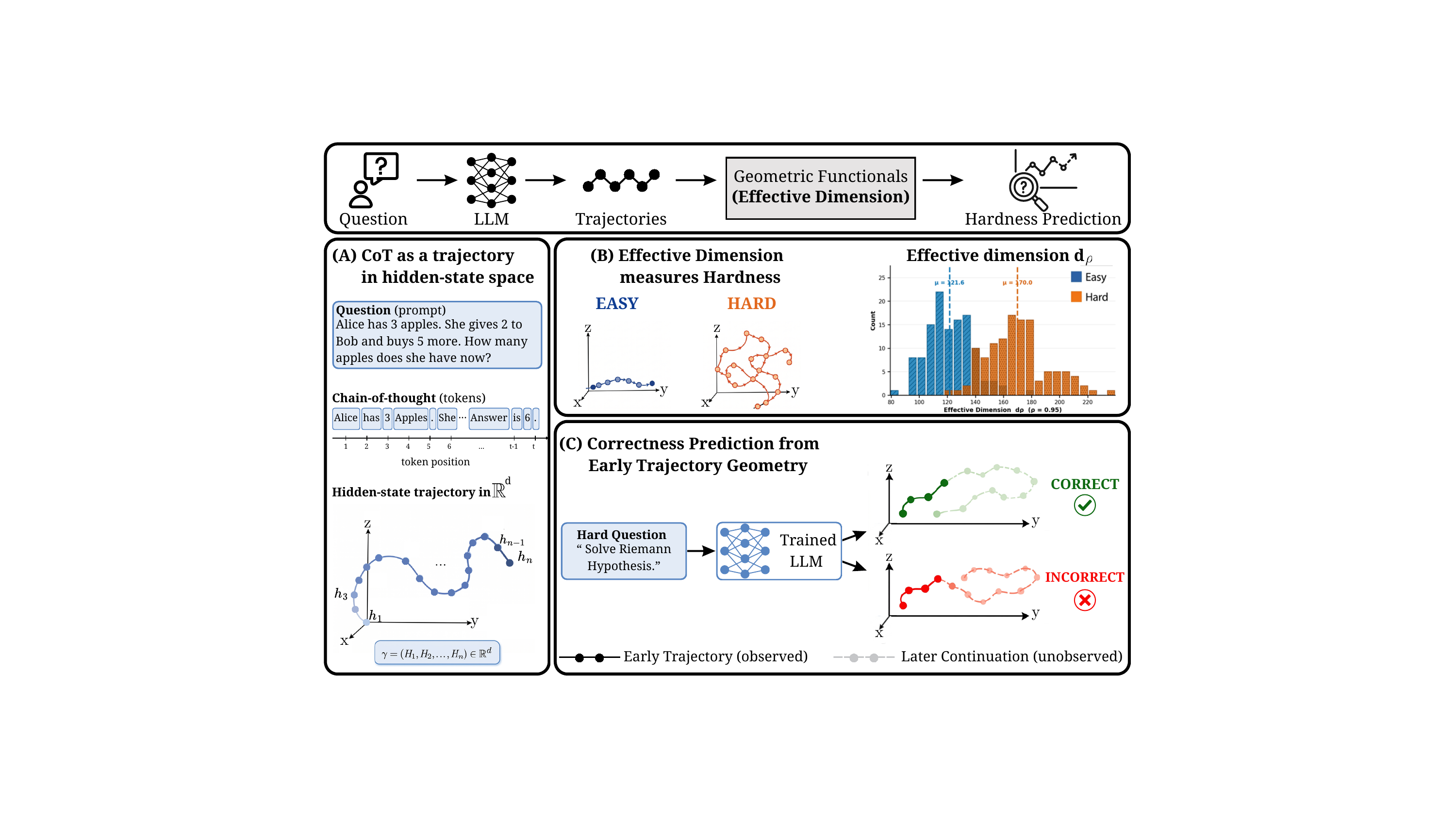}
    \caption{Overview of our framework. A question is fed to an LLM whose
    chain-of-thought generation traces a discrete curve $\gamma = (h_1, \ldots,
    h_n) \in \R^d$ in hidden-state space. \textbf{(A)} Each generated token
    produces a hidden state, forming a trajectory whose geometry we analyze.
    \textbf{(B)} Hard problems induce higher-dimensional trajectories than easy
    ones: the effective dimension $\EffDim$ has mean $\approx 170$ for hard problems
    versus $\approx 122$ for easy ones, providing a geometric measure of task
    hardness. \textbf{(C)} Whether a reasoning chain will reach a correct
    answer is partially detectable from geometric functionals of the early
    trajectory: kinematic features extracted from only the first $20\%$ of
    generated tokens predict solution correctness with high AUC before
    generation is complete.}
    \label{fig:overview}
\end{figure}

Our contributions are as follows:
\begin{itemize}[leftmargin=0.5cm]
    \item Formal Framework for CoT Geometry (Section~\ref{sec:formulation}):
    We formalize 
    CoT reasoning as a discrete curve in $\R^d$ and introduce geometric functionals 
    that extract spectral, positional, and kinematic properties of reasoning trajectories.
   
    \item Effective Dimension as Task Complexity (Section~\ref{sec:effdim}): We
    introduce a geometrical function capturing hardness of the task. More precisely, the effective dimension $\EffDim$ of reasoning curves as a principled
    measure of task hardness. We further, characterize which curves attain the highest
    effective dimension, establishing them as geometric representatives of the
    hardest tasks.
    \item Hardness Prediction (Section~\ref{sec:exp-difficulty}): Using only effective dimension features, we achieve AUC
    $> 0.93$ in predicting whether a mathematical problem is easy or hard.   
    \item Correctness Prediction
    (Section~\ref{sec:exp-correctness}): Seven 
    kinematic and positional features of the trajectory predict solution correctness 
    with AUC $= 0.806$ from only the first $20\%$ of generated tokens, with 
    promising implications for early-exit strategies and best-of-$n$ ranking.
   
\end{itemize}


\section{Related Work}
\label{sec:related}

Chain-of-thought prompting~\citep{wei2022chain, kojima2022large} has emerged as a
powerful technique for eliciting multi-step reasoning in LLMs. Recent work has
explored scaling test-time compute~\citep{snell2024scaling, welleck2024decoding,
muennighoff2025s1}, with systems like OpenAI o1~\citep{openai2024o1} and DeepSeek
R1~\citep{deepseek2025r1} demonstrating strong performance through extended
reasoning chains. A complementary line of work has observed that more reasoning
is not always better: overthinking can degrade performance when the skills required
for a task are underrepresented in training~\citep{su2025between}. Our work
studies these phenomena from a geometric angle, asking not how long a chain is
but what shape it traces in hidden state space.

\citet{javanmard2025understanding} provide a theoretical analysis of test-time
scaling for transformers trained on in-context weight prediction for linear
regression. They characterize task hardness via the ratio of the trace with the
minimum eigenvalue of the feature covariance matrix, showing that harder tasks
require longer chains-of-thought to reach a given error level, and that
insufficient task coverage in training can cause additional reasoning steps to
hurt performance. Our work is complementary but distinct in two ways. First, we
study task hardness empirically in a general LLM rather than deriving it from a
tractable linear model. Second, and more fundamentally, we shift the unit of
analysis from the output chain to the internal hidden state trajectory: we show
that task hardness leaves a geometric signature in the model's representation
space, captured by the effective dimension $\EffDim$ of the trajectory covariance,
and that this quantity alone is highly predictive of problem difficulty 

\citet{korbak2025chain} argue that chain-of-thought reasoning offers a
unique safety opportunity because, for sufficiently hard tasks, transformers must
externalize reasoning through the CoT in order to complete it, making that
reasoning in principle observable. They focus on the content of the generated text
as the monitoring signal and discuss conditions under which this signal may degrade.
Our work operates at a different level: rather than reading the textual content of
the chain, we read the \emph{geometry} of the hidden states that produce it. The
two perspectives are complementary, CoT text monitoring and hidden-state
trajectory analysis can in principle be combined, but our approach is
model-internal and does not rely on the model producing legible natural language
reasoning.

\citet{sun2026llm} study LLM reasoning as a structured trajectory in
representation space, extracting hidden states at explicit step boundaries
(``Step 1:'', ``Step 2:'', \ldots) and showing that these activations form
linearly separable, step-specific subspaces that become more pronounced with
layer depth. For correctness prediction, they achieve high AUC using
late-step trajectory features, and explore inference-time interventions such as
activation steering~\cite{turner2023steering} to correct deviating trajectories.
Our work shares the trajectory perspective but pursues different goals. Rather
than analyzing step-boundary activations, we treat the full token-level hidden
state sequence as a continuous curve and characterize it through spectral and
kinematic geometric functionals. This allows us to ask whether trajectory geometry encodes
task difficulty. We show that the effective dimension $\EffDim$ of the
trajectory covariance, a spectral property of the curve as a whole, predicts
whether a problem is easy or hard with high AUC, and we provide a theoretical
account of why harder tasks necessarily induce higher-dimensional trajectories.
We further show that kinematic features of the trajectory carry an early
correctness signal that is detectable from only the first $20$ percent of generated
tokens, opening a practical route to early stopping and best-of-$n$ ranking
without waiting for generation to complete.

Recent work also has proposed geometric frameworks for understanding how LLMs reason.
\citet{zhou2025geometry} model reasoning as smooth flows in representation space,
using the velocity and Menger curvature of the trajectory to show that logical
structure, rather than surface semantics, governs the direction and magnitude
of these flows. Their focus is on interpretability. Our work takes a
complementary direction: we use geometric functionals of the hidden-state
trajectory, specifically the spectral effective dimension and kinematic
summaries, to predict task difficulty and solution correctness, connecting
trajectory geometry directly to downstream performance.

Lastly, 
\citet{prasad2026effective} show that effective reasoning strategies reduce the
intrinsic dimensionality of the learning objective, measured as the minimum number
of LoRA parameters needed to fine-tune a model to a given accuracy threshold on
GSM8K. They fix the model and vary the reasoning strategy, finding that lower
intrinsic dimensionality correlates strongly with better generalization. While
both their work and ours use notions of dimensionality to characterize reasoning,
the two measures are conceptually distinct. Their intrinsic dimension is a
property of the \emph{learning problem} induced by a reasoning strategy, it
requires fine-tuning experiments and measures how compressible a dataset of
reasoning chains is. Our effective dimension $\EffDim$ is a property of a
\emph{single inference trajectory}, it is computed from the covariance of
hidden states produced during one forward pass and requires no training. This
makes our measure applicable at inference time and enables per-instance
predictions of task difficulty and solution correctness.

\section{Problem Formulation}
\label{sec:formulation}

In this section we formalize Chain-of-Thought reasoning and develop a mathematical framework for characterizing its dynamics via Geometrical Functionals.
Consider a transformer language model with $L$ layers and hidden dimension $d$. Let $\vocab$ denote the finite vocabulary, and let $\vocab^* := \bigcup_{n=0}^{\infty} \vocab^n$ denote the set of all finite sequences over $\vocab$ (the Kleene star of $\mathcal{V}$). Let $\Delta(\vocab)$ denote the simplex of probability measures over $\vocab$. The model defines a map from finite token sequences to probability measures over the next token:
\begin{equation}
    \mu: \vocab^* \to \Delta(\vocab), \quad (x_1, \ldots, x_t) \mapsto \mu(\cdot \mid x_1, \ldots, x_t).
\end{equation}

As such for each layer $\ell \in \{1, \ldots, L\}$, the model also produces a hidden state representation in $\mathbb{R}^{d}$ where $d$ is the dimension of the latent representation, i.e.,:
\begin{equation}
    f^{(\ell)}: \vocab^* \to \R^d, \quad (x_1, \ldots, x_t) \mapsto H_t^{(\ell)}\in \mathbb{R}^{d}.
\end{equation}

When the layer $\ell$ is fixed or clear from context, we write $H_t \equiv H_t^{(\ell)}$.
Given the distribution $\mu(\cdot \mid x_1, \ldots, x_t) \in \Delta(\vocab)$, the next token is selected according to a temperature parameter $T \geq 0$. In particular,
At temperature $T > 0$, we sample from a tempered distribution:
\begin{equation}
    X_{t+1} \sim \mu_T(\cdot \mid x_1, \ldots, x_t), \quad \text{where } \mu_T(x) \propto \mu(x)^{1/T}.
\end{equation}

At temperature $T = 0$, the distribution concentrates on the mode:
\begin{equation}
    X_{t+1} = \arg\max_{x \in \vocab} \mu(x \mid x_1, \ldots, x_t).
\end{equation}
This distinction is fundamental: at $T = 0$, given a prompt, the generated sequence is unique; at $T > 0$, the same prompt yields a distribution over sequences.

\subsection{The Space of CoT Curves}
\label{sec:cot-curves}

At $T = 0$, token selection is deterministic. Given a prompt, there is exactly one generated sequence of tokens, this motivates the following definition  of the space of discrete curves, 
\begin{definition}
Fix a maximum sequence length $n \in \mathbb{N}$. The space of discrete curves of length $n$ is $\mathcal{C}_n := (\R^d)^n$. An element $\gamma \in \mathcal{C}_n$ is a tuple $\gamma = (h_1, \ldots, h_n)$ where $h_{i} \in \mathbb{R}^{d}$.
\end{definition}

We have a natural embedding of $\mathcal{C}_{m} \subseteq \mathcal{C}_n$ for $m \leq n$ by repeating the last element $n-m$ times (in practice, we do not apply this padding but instead work directly with variable-length trajectories).
Let $\mathcal{P} \subset \vocab^*$ denote the space of input prompts. At $T = 0$, the model defines a deterministic map from the set of prompts to the space of curves. In particular, we have:
\begin{equation}
    \mathcal{H}: \mathcal{P} \to \mathcal{C}_n, \quad P \mapsto (h_1, \ldots, h_m, h_m, \ldots, h_m),
\end{equation}
where $m$ is the generation length and the final state $H_m$ is repeated to fill length $n$ (stationary extension). As such we introduce the length $n$ chain of thought as an element in the space of discrete curves of length $n$. In other words, given a $\omega \in \vocab^*$
 the length-$n$ Chain-of-Thought curve is the element $\mathcal{H}(\omega) = (h_1(\omega), \ldots, h_n(\omega)) \in \mathcal{C}_n$ produced by generation from $\omega$, with stationary extension if necessary.
We characterize CoT curves through real-or vector valued functionals, in particular:

\begin{definition}
\label{def:geometric-functional}
A vector-valued Geometric Functional is a function $\varphi: \mathcal{C}_n \to \mathbb{R}^{k}$ that extracts geometric properties of curves. The composition $\varphi \circ \mathcal{H}: \vocab^* \to \mathbb{R}^{k}$ characterizes how these properties vary across prompts.
\end{definition}

As an example, for a curve $\gamma = (h_1, \ldots, h_n) \in \mathcal{C}_n$, define the centered curve with elements $\bar{h}_t = h_t - \frac{1}{n}\sum_{i=1}^n h_i$ and its trajectory covariance matrix, defined as
$
    C(\gamma) := \frac{1}{n} \sum_{t=1}^{n} \bar{h}_t \bar{h}_t^\top = \frac{1}{n} \bar{H}^\top \bar{H} \in \R^{d \times d},
$
where $\bar{H} = [\bar{h}_1, \ldots, \bar{h}_n]^\top \in \R^{n \times d}$.
As such $C$ is a geometrical functional that takes a curve and produces an element in $\mathbb{R}^{d^2}$.

Let $\nu_1 \geq \nu_2 \geq \cdots \geq \nu_d \geq 0$ be the eigenvalues of $C(\gamma)$ ordered in a non-increasing order, then we introduce another such important geometrical functional as follows:
\begin{definition}
\label{def:effdim}
For $\rho \in (0, 1]$, the Effective Dimension at threshold $\rho$ is:
\begin{equation}
    \EffDim(\gamma) := \min\left\{k \in \{1,\ldots,d\} : \frac{\sum_{i=1}^{k} \nu_i}{\tr(C(\gamma))} \geq \rho \right\}.
\end{equation}
This is the minimum number of principal components needed to capture at least $\rho$ fraction of the total variance.
\end{definition}

The effective dimension measures the \emph{intrinsic dimensionality} of the reasoning trajectory. A low $\EffDim$ indicates the trajectory lies near a low-dimensional subspace (simple, structured reasoning), while high $\EffDim$ indicates the trajectory explores many directions (complex, multi-faceted reasoning).

Lastly, we introduce seven additional geometric functionals, which we use in Section~\ref{sec:exp-correctness} for correctness prediction.
Let $\gamma = (h_1, \ldots, h_n) \in \mathcal{C}_n$ be a trajectory, and let 
$P \in \mathbb{R}^{d \times k}$ denote the top-$k$ PCA basis fitted on the 
training set. Define the projected trajectory with elements $\tilde{h}_t = P^\top h_t \in 
\mathbb{R}^k$ for $t = 1, \ldots, n$, and let $\tilde{\gamma} = (\tilde{h}_1, 
\ldots, \tilde{h}_n)$ be the projected curve. Note that to prevent data leakage, when a fraction $\alpha \in (0,1]$ of 
the trajectory is observed, we restrict to the window $\tilde{\gamma}^{(\alpha)} = 
(\tilde{h}_1, \ldots, \tilde{h}_{\lfloor \alpha n \rfloor})$.
Noting that our framework is more general and one can extract many other meaningful functional as needed, we define the following functionals $\varphi: \mathcal{C}_n \to \mathbb{R}^{5k+2}$ 
below.

\textbf{Examples} \emph{(Kinematic and Positional Geometric Functionals).}
\label{def:kinematic-functionals}
Given a projected trajectory $\tilde{\gamma} = (\tilde{h}_1, \ldots, 
\tilde{h}_m)$ for some $m \leq n$, define the velocity increments $\Delta_t := \tilde{h}_{t+1} - 
\tilde{h}_t \in \mathbb{R}^k$ for $1 \leq \forall t \leq m-1$. The seven geometric 
functionals are:
\begin{enumerate}[leftmargin=2.0em, itemsep=0pt]
    \item Mean position, defined as $
        \mu(\tilde{\gamma}) 
        \;=\; \frac{1}{m}\sum_{t=1}^{m} \tilde{h}_t 
        \;\in\; \mathbb{R}^k.
    $

    \item Positional dispersion, the coordinate-wise standard 
    deviation:
    \begin{equation}
        \sigma(\tilde{\gamma}) 
        \;=\; \left(\frac{1}{m}\sum_{t=1}^{m}
        \bigl(\tilde{h}_t - \mu(\tilde{\gamma})\bigr)^{\odot 2}
        \right)^{\odot 1/2}
        \;\in\; \mathbb{R}^k,
    \end{equation}
    where $\odot$ denotes elementwise operations.

    \item Initial hidden state, the first token representation of the projected 
    trajectory, i.e., 
    $
        \tilde{h}_1 \;\in\; \mathbb{R}^k.
    $

    \item Final hidden state, the last token representation of the projected 
    trajectory, i.e., $
        \tilde{h}_m \;\in\; \mathbb{R}^k.
    $

    \item Mean velocity, the average of successive differences:
    \begin{equation}
        \bar{v}(\tilde{\gamma}) 
        \;=\; \frac{1}{m-1}\sum_{t=1}^{m-1} \Delta_t 
        \;\in\; \mathbb{R}^k.
    \end{equation}

    \item Mean speed,  the average step-wise Euclidean norm:
    \begin{equation}
        \bar{s}(\tilde{\gamma}) 
        \;=\; \frac{1}{m-1}\sum_{t=1}^{m-1} \|\Delta_t\|_2 
        \;\in\; \mathbb{R}.
    \end{equation}

    \item Speed dispersion, the standard deviation of step-wise 
    speeds:
    \begin{equation}
        \sigma_s(\tilde{\gamma}) 
        \;=\; \left(\frac{1}{m-1}\sum_{t=1}^{m-1}
        \bigl(\|\Delta_t\|_2 - \bar{s}(\tilde{\gamma})\bigr)^2
        \right)^{1/2}
        \;\in\; \mathbb{R}.
    \end{equation}

\end{enumerate}
The full feature vector is the concatenation
\begin{equation}
    \varphi(\tilde{\gamma}) 
    \;=\; \Bigl(\mu,\;\sigma,\;\tilde{h}_1,\;\tilde{h}_m,\;
    \bar{v},\;\bar{s},\;\sigma_s\Bigr) 
    \;\in\; \mathbb{R}^{5k+2}.
\end{equation}

Note that mean velocity telescopes to $\frac{1}{m-1}(\tilde{h}_m - \tilde{h}_1)$, making it a linear function of the already-included initial and final states. We retain it for completeness and its natural connection to mean speed and speed dispersion.

At temperature $T > 0$, the same prompt could yields different curves at each generation time. Our formulation can extend to this framework:
\begin{equation}
    \mathcal{H}: \vocab^* \to \Delta(\mathcal{C}_n),
\end{equation}
where $\Delta(\mathcal{C}_n)$ denotes distributions over curves. The induced distribution arises from the autoregressive measure:
\begin{equation}
    \mathbb{P}_{\mu_T}(v_1, \ldots, v_n) = \prod_{t=1}^{n} \mu_T(v_t \mid x_1, \ldots, x_k, v_1, \ldots, v_{t-1}).
\end{equation}

\section{Effective Dimension as Task Complexity}
\label{sec:effdim}

This section develops the theoretical core of the paper. We prove general spectral bounds for any covariance matrix, showing via a majorization argument that flat spectra maximize effective dimension (\S\ref{sec:spectral-bounds}) and provide finite-sample stability. All proofs are in Appendix~\ref{app:proofs}.

\subsection{Spectral Bounds and the Role of Flatness}
\label{sec:spectral-bounds}

The effective dimension of any PSD matrix is controlled by its eigenvalue spread. These are purely linear-algebraic facts, independent of any dynamical model.

\begin{proposition}[Spectral Bounds]
\label{prop:spectral-bounds}
For any PSD matrix $C$ with eigenvalues $\nu_1 \geq \cdots \geq \nu_d \geq 0$, $\nu_1 > 0$:
\begin{equation}
    \lceil \rho \, \tr(C)/\nu_1 \rceil \;\leq\; \EffDim(C) \;\leq\; \lceil \rho \, \tr(C)/\nu_d \rceil,
\end{equation}
the upper bound requiring $\nu_d > 0$.
\end{proposition}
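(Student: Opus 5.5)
The plan is to write $k^\ast := \EffDim(C)$ and $S_k := \sum_{i=1}^k \nu_i$, and to prove both bounds by comparing the partial sums $S_k$ with simple linear bounds in $k$. Since $\nu_1>0$ we have $\tr(C)>0$, so the ratio in Definition~\ref{def:effdim} is well defined. Because $S_d/\tr(C) = 1 \geq \rho$, the set over which we minimize is nonempty, so $k^\ast$ is well defined. By minimality, $k^\ast$ is characterized by $S_{k^\ast} \geq \rho\,\tr(C)$ together with $S_{k^\ast-1} < \rho\,\tr(C)$ (the latter vacuous when $k^\ast=1$).

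\emph{Lower bound.} Each eigenvalue satisfies $\nu_i \leq \nu_1$, so $S_k \leq k\nu_1$ for every $k$. Applying this at $k=k^\ast$ gives $k^\ast \nu_1 \geq S_{k^\ast} \geq \rho\,\tr(C)$, hence $k^\ast \geq \rho\,\tr(C)/\nu_1$. Since $k^\ast$ is an integer, it follows that $k^\ast \geq \lceil \rho\,\tr(C)/\nu_1\rceil$.

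\emph{Upper bound (assuming $\nu_d>0$).} Each eigenvalue satisfies $\nu_i \geq \nu_d$, so $S_k \geq k\nu_d$. Set $k_0 := \lceil \rho\,\tr(C)/\nu_d \rceil$. First, $k_0 \geq 1$ because $\rho\,\tr(C)>0$. If $k_0 \geq d$, the bound is trivial since $k^\ast \leq d$. Otherwise $k_0 \in \{1,\dots,d\}$ and $S_{k_0} \geq k_0 \nu_d \geq \rho\,\tr(C)$, so $k_0$ belongs to the feasible set and therefore $k^\ast \leq k_0$.

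The only real care point is this bookkeeping at the upper end: $\lceil\rho\,\tr(C)/\nu_d\rceil$ can exceed $d$, and a ceiling-valued candidate $k_0$ lies in the minimization range only after the case split above. Otherwise both inequalities are one-line consequences of the sorted order $\nu_d \leq \nu_i \leq \nu_1$. If one wants the majorization flavour the section advertises, the same statement can be phrased as saying that the normalized spectrum majorizes the uniform spectrum and is majorized by $e_1$. For this proposition, however, the elementary termwise bounds suffice, and I would present them directly.
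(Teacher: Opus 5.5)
Your proof is correct and is essentially the paper's: the lower bound is identical. For the upper bound, the paper uses the other half of the minimality characterization, $(k^\ast-1)\nu_d \le S_{k^\ast-1} < \rho\,\tr(C)$, so integrality gives $k^\ast \le \lceil \rho\,\tr(C)/\nu_d\rceil$ directly, with no need for your case split on whether $\lceil \rho\,\tr(C)/\nu_d\rceil$ exceeds $d$.

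One caution on your closing aside: $(1/d,\ldots,1/d) \prec \nu/\tr(C) \prec e_1$ only gives $1 \le \EffDim(C) \le \lceil \rho d\rceil$, which is the content of Proposition~\ref{prop:flat-maximal}. Those are different bounds from the $\nu_1$- and $\nu_d$-dependent ones in this statement, so the aside is not a rephrasing of it.
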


\begin{proof}[Proof sketch]
Let $r := \EffDim(C)$. The lower bound follows from $\rho \tr(C) \leq
\sum_{j \leq r} \nu_j \leq r \nu_1$, giving $r \geq \lceil \rho \tr(C)/\nu_1
\rceil$. The upper bound follows from $\sum_{j \leq r-1} \nu_j < \rho \tr(C)$
and $\nu_j \geq \nu_d$, giving $(r-1)\nu_d < \rho \tr(C)$, hence $r \leq
\lceil \rho \tr(C)/\nu_d \rceil$.The complete argument is in Appendix~\ref{app:proofs}.
\end{proof}

The ratio $T/\nu_d$ is similar (they used covariance of the data rather than the dynamic) to the hardness measure of \citet{javanmard2025understanding}; the upper bound shows it controls $\EffDim$, but \emph{loosely}. When the spectrum is flat, the bounds coincide at $\lceil \rho d\rceil$. Flatness is extremal in a stronger sense, captured by majorization.

\begin{definition}
\label{def:majorization}
For $x, y \in \R^d$, we say $x$ is majorized by $y$, written $x \prec y$, if
\begin{equation}
\sum_{i=1}^k x_{[i]} \;\leq\; \sum_{i=1}^k y_{[i]} \quad \text{for all } k = 1, \ldots, d-1, \quad \text{and} \quad \sum_{i=1}^d x_{[i]} = \sum_{i=1}^d y_{[i]},
\end{equation}
where $x_{[1]} \geq \cdots \geq x_{[d]}$ is the decreasing rearrangement of $x$. Informally: $x$'s mass is more ``spread out'' than $y$'s, but both have the same total \cite{marshall2011inequalities}.
\end{definition}

Majorization gives us a precise way to compare how ``peaked'' two spectra are. We use it to show that the flat spectrum $\nu^\star$ is the most spread-out among all spectra with the same trace, and achieves the highest effective dimension. More formally we have:

\begin{proposition}[Flat Spectrum Maximizes Effective Dimension]
\label{prop:flat-maximal}
Let $C$ be PSD with eigenvalues $\nu \in \R^d$ and trace $\tr(C)$. Let $\nu^\star := (\tr(C)/d, \ldots, \tr(C)/d)$, the flat spectrum with the same total. Then:
\begin{enumerate}
    \item $\nu^\star \prec \nu$: the flat spectrum is majorized by any other spectrum with the same total.
    \item $\EffDim(C) \leq \EffDim(C^\star) = \lceil \rho d\rceil$.
\end{enumerate}
\end{proposition}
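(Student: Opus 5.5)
Part 1 follows directly from Definition~\ref{def:majorization}. Write $\tau := \tr(C) = \sum_i \nu_i$ and sort $\nu$ decreasingly. For each $k \in \{1,\ldots,d-1\}$ we must show
\begin{equation*}
\frac{k\tau}{d} \;\leq\; \sum_{i\le k}\nu_i .
\end{equation*}
The plan is the standard averaging argument. The top $k$ eigenvalues each dominate every one of the bottom $d-k$, so the mean of the top $k$ is at least the overall mean $\tau/d$. Concretely, $(d-k)\sum_{i\le k}\nu_i \geq k(d-k)\nu_k \geq k\sum_{i>k}\nu_i$. Adding $k\sum_{i\le k}\nu_i$ to both sides gives $d\sum_{i\le k}\nu_i \geq k\tau$. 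The totals agree trivially, so $\nu^\star \prec \nu$.

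For Part 2, I first compute $\EffDim(C^\star)$. Assume $\tau>0$; if $\tau = 0$ the ratio in Definition~\ref{def:effdim} is undefined, and I would exclude this case as in Proposition~\ref{prop:spectral-bounds}. For the flat spectrum the cumulative fraction after $k$ components is exactly $k/d$, so the minimal $k$ with $k/d \geq \rho$ is $\lceil \rho d\rceil$. This value lies in $\{1,\ldots,d\}$ because $\rho\in(0,1]$. Equivalently, this is the case in Proposition~\ref{prop:spectral-bounds} where both bounds coincide.

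Next I show $\EffDim(C)\leq \lceil\rho d\rceil$. Set $k_0 := \lceil \rho d\rceil$. By Part 1, or by the same inequality at $k=d$, which holds with equality, we have $\sum_{i\le k_0}\nu_i/\tau \geq k_0/d \geq \rho$. So $k_0$ belongs to the set whose minimum defines $\EffDim(C)$, and hence $\EffDim(C)\le k_0$. More generally, this shows that $\EffDim$ is order-reversing with respect to majorization: if $x\prec y$ with equal totals, every $k$ feasible for $x$ is feasible for $y$. One could present the result in that form and then specialize to $x = \nu^\star$.

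There is no real obstacle here. The only points needing care are the edge case $k_0 = d$, which the case $k=d$ of the averaging inequality covers, and noting that $\tr(C)>0$ is needed for $\EffDim$ to be defined.
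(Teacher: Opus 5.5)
Your proof is correct and follows essentially the same route as the paper. For Part 1, both use the averaging fact that the mean of the top $k$ eigenvalues is at least the overall mean $\tr(C)/d$: the paper argues by contradiction, while you give the direct inequality chain through $\nu_k$. Part 2 is identical, applying Part 1 at $k_0=\lceil\rho d\rceil$ to show $k_0$ is feasible for $C$, and your explicit handling of $\tr(C)>0$ and of the case $k_0=d$ adds care the paper's proof leaves implicit.
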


\begin{proof}[Proof sketch]
We first show $\nu^\star \prec \nu$: since both vectors have total
$\tr(C)$, this reduces to $\sum_{i \leq k} \nu_i \geq k\tr(C)/d$, i.e.,
the top-$k$ average is at least the overall mean. If not, then $\nu_k$
(and hence every $\nu_j$ with $j > k$, by decreasing order) is also
below $\tr(C)/d$, so $\tr(C) = \sum_i \nu_i < \tr(C)$, a contradiction. The second one follows by definition and using part 1.  The complete argument is in Appendix~\ref{app:proofs}. 
\end{proof}


\subsection{Stability under small Perturbation}
\label{sec:stability}

The following framework-independent and purely linear-algebraic result shows that if two covariance matrices are $\epsilon$-close in operator norm(which is $\epsilon$-close in operator norm ($\|\widehat{C} - C\|_{\mathrm{op}} := \max_{\|v\|=1}\|(\widehat{C}-C)v\|$), their effective dimensions are also close. 

\begin{theorem}
\label{thm:stability}
Let $C, \widehat C$ be PSD with $\|\widehat C - C\|_{\mathrm{op}} \leq \epsilon$ and $\tr(C) > 0$. Assume $d\epsilon \leq \tr(C)/2$. Define $F(j) := \sum_{i\leq j}\nu_i/\tr(C)$ where $\nu_i$ are the decreasingly-sorted eigenvalues of $C$. Then
\begin{equation}
\label{eq:stability}
\bigl|\EffDim(\widehat C) - \EffDim(C)\bigr|
\;\leq\;
\#\!\left\{ j \in \{1,\ldots,d\} : |F(j) - \rho| \leq \tfrac{4d\epsilon}{\tr(C)}\right\}.
\end{equation}
In particular, if the cumulative mass function $F$ crosses level $\rho$ transversally (i.e., $F$ has no index within distance $4d\epsilon/\tr(C)$ of $\rho$), then $\EffDim(\widehat C) = \EffDim(C)$.
\end{theorem}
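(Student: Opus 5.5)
Let $\hat\nu_i$ denote the decreasingly sorted eigenvalues of $\widehat C$, and set $\widehat F(j) := \sum_{i\le j}\hat\nu_i/\tr(\widehat C)$. The plan is to show that $\widehat F$ and $F$ are uniformly close, $|\widehat F(j)-F(j)|\le \delta := 4d\epsilon/\tr(C)$ for every $j$. The effective dimension is then pinned down by where each cumulative function crosses $\rho$. By Definition~\ref{def:effdim}, $\EffDim(C)=\min\{j: F(j)\ge\rho\}$ and $\EffDim(\widehat C)=\min\{j:\widehat F(j)\ge \rho\}$. Both $F$ and $\widehat F$ are nondecreasing in $j$.

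\textbf{Uniform closeness.} By Weyl's inequality, $|\hat\nu_i-\nu_i|\le\epsilon$ for each $i$. Hence the partial sums satisfy $|S_j-\widehat S_j|\le j\epsilon\le d\epsilon$, where $S_j=\sum_{i\le j}\nu_i$ and $\widehat S_j$ is defined analogously. Taking $j=d$ gives $|\tr(\widehat C)-\tr(C)|\le d\epsilon\le \tr(C)/2$, so $\tr(\widehat C)\ge \tr(C)/2>0$ and $\widehat F$ is well defined. Now write
\[
\widehat F(j)-F(j)=\frac{\widehat S_j-S_j}{\tr(\widehat C)}+S_j\Bigl(\frac{1}{\tr(\widehat C)}-\frac{1}{\tr(C)}\Bigr).
\]
The first term is at most $d\epsilon/(\tr(C)/2)=2d\epsilon/\tr(C)$. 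For the second term, use $S_j\le\tr(C)$ to get
\[
\Bigl|S_j\Bigl(\frac{1}{\tr(\widehat C)}-\frac{1}{\tr(C)}\Bigr)\Bigr|\le \frac{\tr(C)\,|\tr(C)-\tr(\widehat C)|}{\tr(C)\,\tr(\widehat C)}\le \frac{d\epsilon}{\tr(C)/2}=\frac{2d\epsilon}{\tr(C)}.
\]
Together these give $\|\widehat F-F\|_\infty\le\delta$. This is the only place the hypothesis $d\epsilon\le\tr(C)/2$ is used.

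\textbf{Counting argument.} Set $r=\EffDim(C)$ and $\hat r=\EffDim(\widehat C)$, and suppose $\hat r>r$ (the case $\hat r<r$ is symmetric). For every $j$ with $r\le j<\hat r$ we have $F(j)\ge\rho$, since $j\ge r$ and $F$ is monotone. We also have $\widehat F(j)<\rho$, since $j<\hat r$. Combining with uniform closeness, $\rho\le F(j)<\rho+\delta$, so $|F(j)-\rho|\le\delta$. These are $\hat r-r$ distinct indices in the counted set, which gives the bound.

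In the symmetric case $\hat r<r$, every $j$ with $\hat r\le j<r$ satisfies $F(j)<\rho\le\widehat F(j)\le F(j)+\delta$. This again gives $|F(j)-\rho|\le\delta$, and there are $r-\hat r$ such indices.

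\textbf{Corollary and main obstacle.} The "in particular" clause follows immediately: if no index satisfies $|F(j)-\rho|\le\delta$, the right-hand side is zero, so $\hat r=r$. I expect the main obstacle to be the uniform bound on $\widehat F-F$, specifically tracking the trace-normalisation term with the right constants. The counting step is elementary once monotonicity of $F$ and $\widehat F$ is in hand.
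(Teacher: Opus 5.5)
Your proof is correct and follows essentially the same route as the paper. Both use Weyl's inequality to bound the eigenvalue and partial-sum perturbations, apply the same common-denominator decomposition of $\widehat F(j)-F(j)$ with $\tr(\widehat C)\ge \tr(C)/2$ to get the uniform $4d\epsilon/\tr(C)$ bound, and then count indices near the threshold crossing. Your counting step is slightly more explicit than the paper's: you use monotonicity of $F$ and $\widehat F$ to exhibit $|\hat r - r|$ distinct indices in the band, a step the paper leaves implicit.
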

\begin{proof}[Proof sketch]
The argument proceeds in three steps. First, recall \emph{Weyl's
inequality}: for symmetric matrices $A, B \in \mathbb{R}^{d \times d}$
with eigenvalues sorted in decreasing order,
$|\lambda_k(A + B) - \lambda_k(A)| \leq \|B\|_{\mathrm{op}}$ for every
$k$. Applied with $A = C$ and $B = \widehat{C} - C$, this gives
$|\hat\nu_k - \nu_k| \leq \epsilon$ for all $k$: each eigenvalue of
$\widehat{C}$ is within $\epsilon$ of the corresponding eigenvalue of
$C$. Summing across $k$, we also get $|\tr(\widehat{C}) - \tr(C)| \leq
d\epsilon$, and similarly the partial sums $S_j := \sum_{i \leq j}
\nu_i$ and $\widehat{S}_j$ differ by at most $d\epsilon$.

Second, we propagate this to the cumulative-mass function
$F(j) := S_j / \tr(C)$. Using the algebraic identity $\widehat{S}_j
\tr(C) - S_j \tr(\widehat{C}) = (\widehat{S}_j - S_j)\tr(C) + S_j
(\tr(C) - \tr(\widehat{C}))$ and bounding each piece via the triangle
inequality, we obtain $|\widehat{F}(j) - F(j)| \leq 4d\epsilon / \tr(C)
=: \delta$.

Finally, since $\EffDim$ is the first index at which $F$ reaches
$\rho$, and $|F(j) - \widehat{F}(j)| \leq \delta$ everywhere, the two
effective dimensions can only disagree at indices where $F(j)$ lies
within $\delta$ of $\rho$: away from this band, $F$ and $\widehat{F}$
agree on whether the threshold has been crossed. Counting such indices
yields \eqref{eq:stability}. In particular, if $F$ jumps past $\rho$
transversally at a single index, with no $j$ satisfying $|F(j) -
\rho| \leq \delta$, then $\EffDim(\widehat{C}) = \EffDim(C)$
exactly. The complete argument is in Appendix~\ref{app:proofs}.
\end{proof}

\section{Experiments}
\label{sec:experiments}

\subsection{Experimental Setup}

\paragraph{Model.}
We use Qwen2.5-0.5B-Instruct~\citep{qwen2024}, a decoder-only transformer with
24 layers and hidden dimension 896. Despite its small size, it produces
well-structured reasoning chains and allows us to extract hidden states at every
layer and token position without prohibitive memory cost.

\paragraph{Dataset.}
We focus on three categories from the MATH500 dataset~\citep{hendrycks2021measuring}:
Algebra and Counting \& Probability and Precalculus. Problems in the MATH500 dataset are labeled with difficulty annotations ranging from $1$-$5$. We consider the problems with annotations of $1$, $3$ and $5$, which we respectively label as \texttt{easy}, \texttt{medium} and 
\texttt{hard}.

We use a fixed set of $9$ (probability had $2$ easy question rather than $3$)
questions per category, drawn to ensure a balanced difficulty split. In the case of comparing effective dimension for task difficulty, we only use questions labeled as easy or hard (i.e. annotated as $1$ or $5$).

\paragraph{Trajectory Collection.}
For each problem, we generate 10 reasoning trajectories at temperature $T = 0.7$
using two chain-of-thought prompting styles (medium and long which, which are provided in the appendix), pooled for
analysis. Each trajectory is generated autoregressively with a maximum of 800
tokens. We extract the hidden state $h_t^{(\ell)} \in \R^{896}$ at every
generated token position $t$ for all layers $\ell \in \{0, \ldots, 23\}$,
yielding a trajectory matrix $H^{(\ell)}$ per layer per
run. Correctness is determined by a symbolic answer checker combining SymPy
expression matching and string normalization against the ground-truth boxed
answer.

\paragraph{Features for Correctness Prediction.}
For each trajectory, we project $H^{(\ell)}$ onto the top 15 PCA components
fitted on the training set, then extract seven kinematic and positional features
from the windowed portion of the projected trajectory (which are defined in section \ref{sec:cot-curves}): mean position, positional
dispersion (standard deviation), initial hidden state, final hidden state of the truncated trajectories, mean
velocity (mean of successive differences), mean speed (mean of step-wise norms),
and speed dispersion. Features are standardized before classification.

\paragraph{Evaluation Protocol.}

For correctness prediction, we use a stratified question-level 80/20 train/test
split, repeated over 5 random seeds, ensuring that all trajectories from a given
question appear entirely in train or entirely in test. We report AUC-ROC (Area
Under the Receiver Operating Characteristic curve) and AUPRC (Area Under the
Precision--Recall Curve) with $\pm $ std across splits. For classifiers we use
logistic regression (LR; $\ell_2$ regularization $C = 0.1$), a two-layer MLP
(hidden sizes 64 and 32, early stopping, $\ell_2$ regularization $\alpha = 0.01$),
and a two-layer GRU (hidden dim 64). We highlight AUPRC because our central
question is whether trajectory geometry can rank correct solutions above
incorrect ones. AUPRC summarizes this ranking through precision (how many of the
trajectories flagged as correct actually are) and recall (how many of the truly
correct trajectories are recovered) across all thresholds. This directly reflects
our intended use, deciding which of the many candidate trajectories for a given
question to continue, since a high-precision, high-recall score lets us
concentrate on promising trajectories and potentially reach a correct solution
faster.

\subsection{Experiment 1: Correctness Prediction from Trajectory Geometry}
\label{sec:exp-correctness}

We ask whether the geometry of a reasoning trajectory predicts whether it will
reach a correct answer, and how early in generation this signal emerges.

\paragraph{Cross-question generalization.}
Figure~\ref{fig:correctness} shows AUPRC and AUC-ROC results for correctness prediction as a function of the fraction of the trajectory observed, for the LR, MLP, and GRU classifiers, under a stratified question-level 80/20 split averaged over 5 seeds. LR achieves
AUC $= 0.806 \pm 0.132$ at $20\%$ observation and $0.839 \pm 0.111$ at $100\%$,
with AUPRC remaining stable around $0.868$ throughout. MLP reaches AUC $= 0.723
\pm 0.154$ at $20\%$, dips at $30\%$, then recovers to $0.828 \pm 0.093$ at
$100\%$. The GRU behaves qualitatively differently: it achieves AUC $= 0.641
\pm 0.215$ at $20\%$, peaks at $0.754 \pm 0.078$ at $70\%$, and then
\emph{drops} to $0.577 \pm 0.144$ at $100\%$, barely above chance.

\begin{figure}[t]
    \centering
    \begin{subfigure}[b]{0.49\textwidth}
        \centering
        \includegraphics[width=\textwidth]{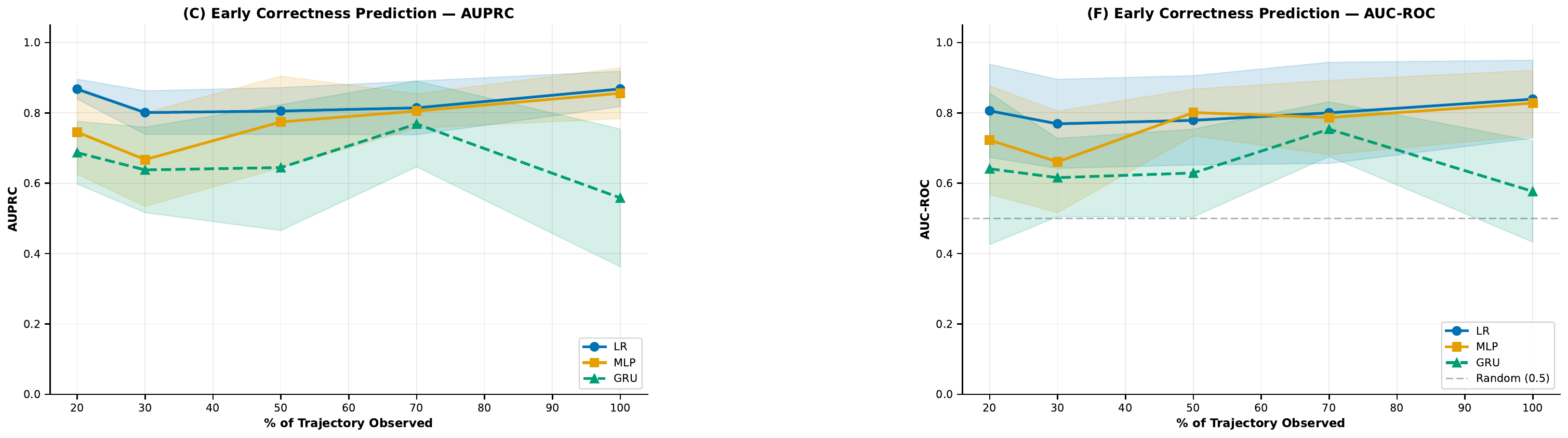}
        \label{fig:fraction-sweep}
    \end{subfigure}
    \hfill
    \begin{subfigure}[b]{0.49\textwidth}
        \centering
        \includegraphics[width=\textwidth]{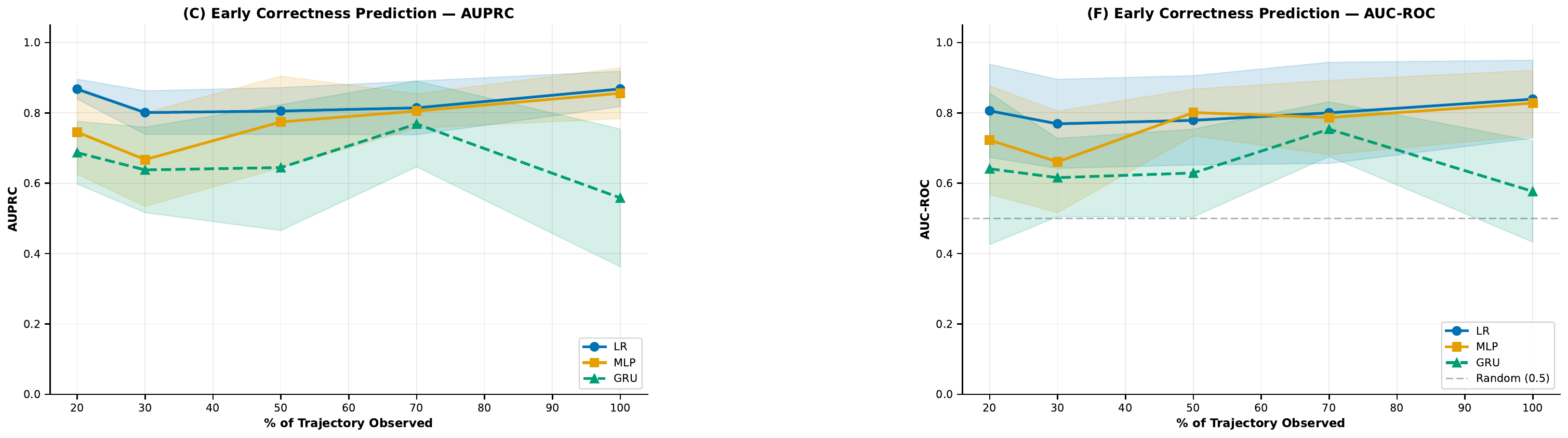}
        \label{fig:model-comparison}
    \end{subfigure}

    \caption{
    Correctness is detectable from early trajectory geometry.
    \textbf{(Left)} AUPRC as a function of the fraction of the trajectory observed.
    \textbf{(Right)} AUC-ROC as a function of the fraction of the trajectory observed.
    Kinematic and positional features extracted from a windowed prefix of the reasoning trajectory predict whether the trajectory will reach a correct answer before generation is complete.
    The train/test split is at the question level, using an 80/20 split across five seeds.
    }
    \label{fig:correctness}
\end{figure}

\paragraph{Within-question prediction.}
To understand the ceiling of the correctness signal, we also evaluate a setting where trajectories from the same question can appear in both train and test, while ensuring no trajectory is included in both (i.e. there is no data leakage, but rather leakage at the question level) with 5-fold
cross-validation 
over the set of all pooled trajectories.
Here LR achieves AUC $\approx 0.90$
and MLP achieves AUC $\approx 0.91$ using only $20\%$ of the trajectory, with
negligible improvement as more tokens are observed (Table~\ref{tab:within}).
This confirms that the geometric signal for correctness is saturated very early, and that the
gap between within-question ($0.90$) and cross-question ($0.81$) performance
represents the portion of the signal that is question-specific rather than
universally transferable, or that our sample size was not high enough to generalize to such an extent.

\begin{table}[h]
\centering
\caption{Within-question correctness prediction (5-fold CV). AUC-ROC $\pm$ std.}
\label{tab:within}
\begin{tabular}{lcccc}
\toprule
Trajectory \% & \multicolumn{2}{c}{Logistic Regression} &
               \multicolumn{2}{c}{MLP} \\
              & AUC & Std & AUC & Std \\
\midrule
20\%  & 0.902 & 0.012 & 0.912 & 0.023 \\
50\%  & 0.902 & 0.016 & 0.914 & 0.015 \\
100\% & 0.919 & 0.017 & 0.909 & 0.022 \\
\bottomrule
\end{tabular}
\end{table}

\subsection{Experiment 2: Difficulty Prediction via Effective Dimension}
\label{sec:exp-difficulty}

We ask whether the effective dimension $\EffDim$ of the hidden-state trajectory
can predict whether a problem is easy or hard (MATH500 difficulty of $1$ vs. $5$), without any access to the answer
or the model's output.
For each problem, at layer $\ell = 12$, we compute $\EffDim(H^{(\ell)}, \rho)$
at $\rho \in \{0.90, 0.95, 0.99\}$, yielding a three-dimensional feature vector
per trajectory. We then run leave-one-question-out cross-validation: for each test question, we
train an LR or MLP classifier on the effective dimension features from all other
questions, and predict whether the test question is easy or hard.

\paragraph{Results.}
Figure~\ref{fig:main-results} (left) shows AUC-ROC as a function of layer for both
classifiers. Effective dimension is predictive at every layer, with AUC rising
from $0.81$ at layer 0 to $0.93$ at layer 21. Prediction is
consistent and robust: even the earliest layers achieve AUC well above chance,
and the signal strengthens through the network. The best layer is layer 21 (MLP
AUC $= 0.93$, accuracy $= 86.8\%$).
Figure~\ref{fig:main-results} (right) shows the distribution of $\EffDim$ at layer 21
($\rho = 0.95$) for easy and hard problems. The separation is apparent: hard
problems have mean effective dimension $\mu \approx 170$ versus $\mu = 121.6$ for
easy problems, a $40\%$ gap, with a t-test $p$-value of $5.79 \times 10^{-166}$.
Hard trajectories explore a substantially higher-dimensional subspace of the
model's representation space, providing  strong evidence for the relationship between effective dimension and task hardness.

\begin{figure}[t]
    \centering
    \begin{subfigure}[b]{0.48\textwidth}
        \centering
        \includegraphics[width=\textwidth]{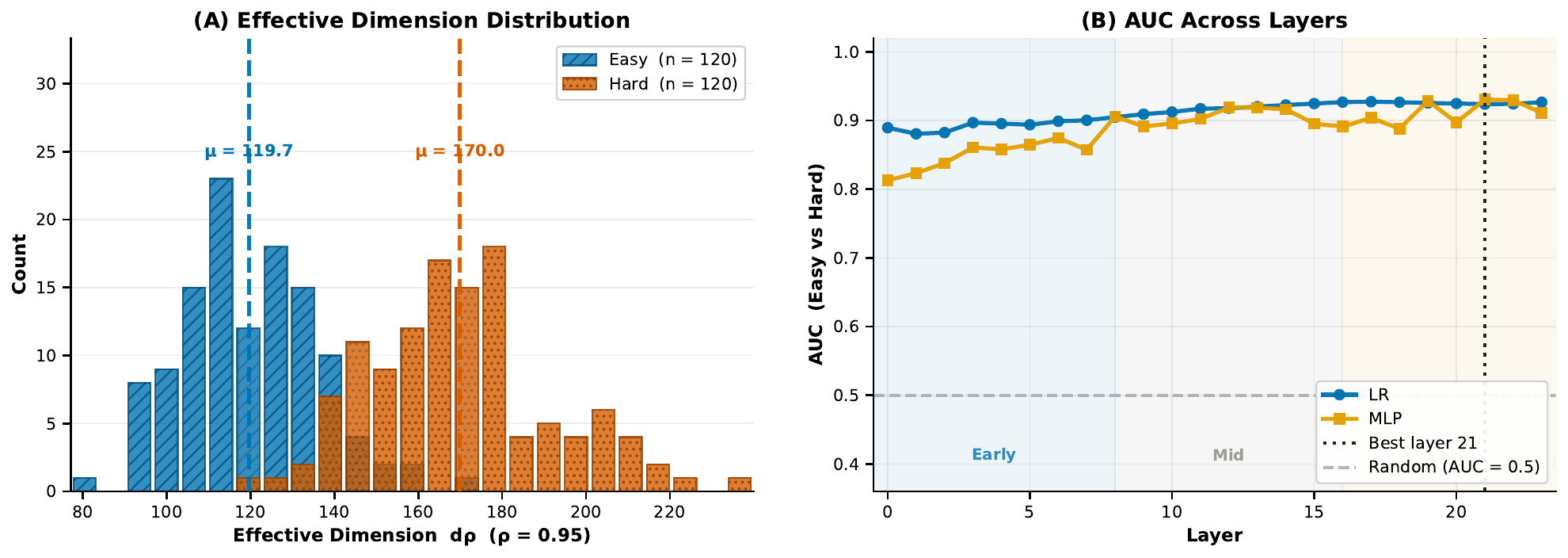}
        \label{fig:layer-sweep}
    \end{subfigure}
    \hfill
    \begin{subfigure}[b]{0.48\textwidth}
        \centering
        \includegraphics[width=\textwidth]{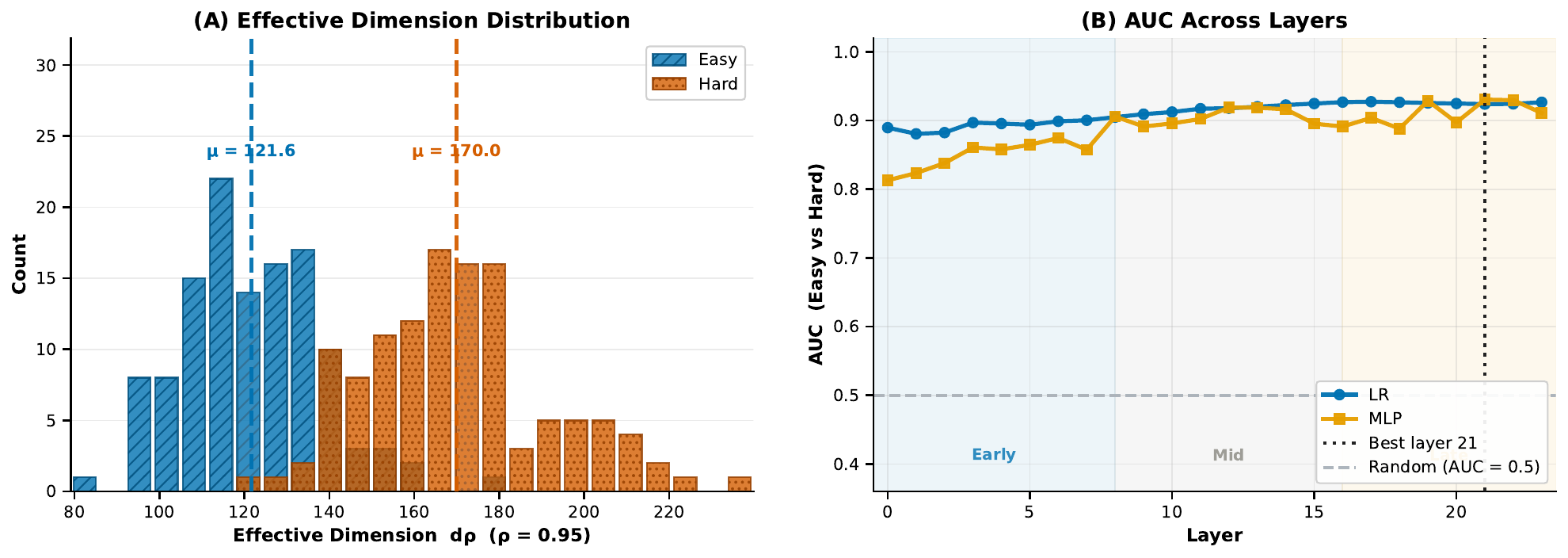}
        \label{fig:effdim-dist}
    \end{subfigure}

    \caption{
    Effective dimension predicts task difficulty.
    \textbf{(Left)} AUC for easy vs.\ hard difficulty prediction across all 24
    layers. Effective dimension achieve AUC $> 0.9$ from layer 8
    onward, peaking at $0.93$ at layer 21.
    \textbf{(Right)} Distribution of $\EffDim$ ($\rho = 0.95$) at layer 21. Hard
    problems exhibit $40\%$ higher effective dimension than easy ones
    ($p = 5.79 \times 10^{-166}$).
    The spectral geometry
    of the hidden-state trajectory encodes task hardness across all layers, with
    the signal strengthening toward the final layers.
    }
    \label{fig:main-results}
\end{figure}

\section{Conclusion}
\label{sec:conclusion}

We study the geometry of chain-of-thought reasoning trajectories in transformer
hidden state space. Formalizing each reasoning chain as a discrete curve in
$\R^d$, we introduced the effective dimension $\EffDim$ as a spectral measure of
trajectory complexity and show theoretically that harder tasks necessarily
induce higher-dimensional trajectories. Empirically, $\EffDim$ predicts problem
difficulty with AUC $> 0.93$ via leave-one-question-out cross-validation, with
hard problems exhibiting $40\%$ higher effective dimension on average than easy ones.
For correctness prediction, seven kinematic features of the trajectory achieve
AUC $= 0.806$ from only the first $20\%$ of generated tokens under a
question-level split, with a simple logistic regression outperforming a GRU on
the full sequence, suggesting there are interesting signal lies in coarse geometric structure. Together, these results establish trajectory
geometry as a practical window into both task hardness and solution quality.
\smallskip

\noindent\textbf{Limitations.}
Our experiments use a single small model and three MATH500 categories, so
generalization to larger models and other domains remains open. Our theoretical
analysis connects spectral flatness to effective dimension but does not fully
explain how task difficulty induces this flatness in a trained transformer.
Finally, our correctness prediction is evaluated in terms of AUC; translating
this into concrete early-stopping or best-of-$n$ gains is left for future work.

\section*{Acknowledgments}
AJ was supported in part by the NSF Award DMS-2311024, an Amazon Faculty Research Award, an Adobe Faculty Research Award, and an iORB grant form USC Marshall School of Business.


\bibliographystyle{plainnat}
\bibliography{ref}


\appendix

\section{Proofs}
\label{app:proofs}

In the following proofs, for simplicity of the notations we used $T := \tr(C)$.

\subsection{Proof of Proposition~\ref{prop:spectral-bounds}}

\begin{proof}
Let $r := \EffDim(C)$. By definition of $\EffDim$, $r$ is the smallest integer with $\sum_{j=1}^{r}\nu_j \geq \rho T$, so we have
\begin{equation}
\sum_{j=1}^{r}\nu_j \;\geq\; \rho T \quad\text{and}\quad \sum_{j=1}^{r-1}\nu_j \;<\; \rho T.
\label{eq:effdim-def}
\end{equation}

\smallskip
\textbf{Lower bound.} Since the eigenvalues are sorted in decreasing order, $\nu_j \leq \nu_1$ for all $j$. Summing over $j=1,\ldots,r$:
\[
\rho T \;\leq\; \sum_{j=1}^{r}\nu_j \;\leq\; r \cdot \nu_1.
\]
Dividing by $\nu_1 > 0$ gives $r \geq \rho T/\nu_1$. Since $r$ is a positive integer, $r \geq \lceil \rho T/\nu_1\rceil$.

\smallskip
\textbf{Upper bound.} Since $\nu_j \geq \nu_d$ for all $j \leq r-1$, summing:
\[
(r-1) \cdot \nu_d \;\leq\; \sum_{j=1}^{r-1}\nu_j \;<\; \rho T,
\]
where the strict inequality is from \eqref{eq:effdim-def}. Hence $r-1 < \rho T/\nu_d$, and integrality gives $r \leq \lceil \rho T / \nu_d\rceil$.
\end{proof}

\subsection{Proof of Proposition~\ref{prop:flat-maximal}}

\begin{proof}
\textbf{(1) $\nu^\star \prec \nu$.} Both vectors have the same total $T$, so the equality condition in Def.~\ref{def:majorization} holds. We must show $\sum_{i=1}^k \nu^\star_i \leq \sum_{i=1}^k \nu_i$ for every $k$, i.e., $kT/d \leq \sum_{i=1}^k \nu_i$. Equivalently, we must show the top-$k$ average of $\nu$ is at least the overall average:
\[
\frac{1}{k}\sum_{i=1}^k \nu_i \;\geq\; \frac{T}{d}.
\]

Suppose for contradiction that $\frac{1}{k}\sum_{i=1}^k \nu_i < T/d$. Since $\nu$ is sorted decreasingly, $\nu_k \leq \frac{1}{k}\sum_{i=1}^k \nu_i < T/d$ and $\nu_j \leq \nu_k < T/d$ for all $j \geq k$. Then
\[
T \;=\; \underbrace{\sum_{i=1}^k \nu_i}_{< kT/d} \;+\; \underbrace{\sum_{i=k+1}^d \nu_i}_{< (d-k)T/d} \;<\; \frac{kT}{d} + \frac{(d-k)T}{d} \;=\; T,
\]
a contradiction. Hence the top-$k$ average is at least $T/d$, and $\nu^\star \prec \nu$.

\smallskip
\textbf{(2) $\EffDim$ is maximized at the flat spectrum.} Let $r^\star := \EffDim(C^\star) = \lceil \rho d\rceil$ (directly, since $\sum_{i=1}^{r^\star} \nu^\star_i = r^\star T/d \geq \rho T$ iff $r^\star \geq \rho d$). Applying part (1) at $k = r^\star$:
\[
\sum_{i=1}^{r^\star} \nu_i \;\geq\; \sum_{i=1}^{r^\star} \nu^\star_i \;\geq\; \rho T.
\]
By definition of $\EffDim$ as the \emph{smallest} such $r$, $\EffDim(C) \leq r^\star = \EffDim(C^\star)$.

\smallskip
\textbf{Equality condition.} Equality throughout requires $\nu^\star \prec \nu$ to be an equality of partial sums at every $k$, forcing $\nu_i = T/d$ for all $i$, i.e., $\nu = \nu^\star$.
\end{proof}

\subsection{Proof of Theorem~\ref{thm:stability}}

\begin{proof}

Before we prove this, we recall Weyl inequality which states,
for any symmetric matrices $A, B \in \mathbb{R}^{d \times d}$ with eigenvalues
sorted in decreasing order,
\begin{equation*}
\bigl|\lambda_k(A + B) - \lambda_k(A)\bigr| \;\leq\; \|B\|_{\mathrm{op}}
\qquad \text{for all } k = 1, \ldots, d.
\end{equation*}
where $\|B\|_{\mathrm{op}} := \text{max}(|\lambda_{1}(B)|,|\lambda_d(B)|)$.
In particular, taking $A = C$ and $B = \widehat{C} - C$ gives
$|\hat\nu_k - \nu_k| \leq \|\widehat{C} - C\|_{\mathrm{op}}$ which we use shortly. Let $\nu_1 \geq \cdots \geq \nu_d$ and $\hat\nu_1 \geq \cdots \geq \hat\nu_d$ be the decreasingly-sorted eigenvalues of $C$ and $\widehat C$.
Weyl's inequality for Hermitian matrices gives $|\hat\nu_j - \nu_j| \leq \|\widehat C - C\|_{\mathrm{op}} \leq \epsilon$ for every $j$. Summing: $|\tr(\widehat C) - \tr(C)| = |\sum_j (\hat\nu_j - \nu_j)| \leq d\epsilon$.
Now, let $S_j := \sum_{i=1}^j \nu_i$ and $\widehat S_j := \sum_{i=1}^j \hat\nu_i$. By Weyl again, $|\widehat S_j - S_j| \leq j\epsilon \leq d\epsilon$.

Define $F(j) := S_j/\tr(C)$ and $\widehat F(j) := \widehat S_j/\tr(\widehat C)$. We compute
\[
\bigl|\widehat F(j) - F(j)\bigr| \;=\; \left|\frac{\widehat S_j}{\tr(\widehat C)} - \frac{S_j}{\tr(C)}\right| \;=\; \left|\frac{\widehat S_j \tr(C) - S_j \tr(\widehat C)}{\tr(\widehat C)\tr(C)}\right|.
\]
Using $\widehat S_j \tr(C) - S_j \tr(\widehat C) = (\widehat S_j - S_j)\tr(C) + S_j(\tr(C) - \tr(\widehat C))$ and the triangle inequality:
\[
\bigl|\widehat F(j) - F(j)\bigr| \;\leq\; \frac{|\widehat S_j - S_j|}{\tr(\widehat C)} + \frac{S_j \cdot |\tr(C) - \tr(\widehat C)|}{\tr(\widehat C)\tr(C)} \;\leq\; \frac{d\epsilon}{\tr(\widehat C)} + \frac{d\epsilon \cdot S_j}{\tr(\widehat C)\tr(C)}.
\]

Since $S_j \leq \tr(C)$, the second term is at most $d\epsilon/\tr(\widehat C)$. Also $\tr(\widehat C) \geq \tr(C) - d\epsilon \geq \tr(C)/2$ (using the assumption $d\epsilon \leq \tr(C)/2$). Therefore
\[
\bigl|\widehat F(j) - F(j)\bigr| \;\leq\; \frac{2 d\epsilon}{\tr(\widehat C)} \;\leq\; \frac{4 d\epsilon}{\tr(C)}.
\]

By definition, $\EffDim(C) = \min\{j : F(j) \geq \rho\}$ and similarly for $\widehat C$. If $F$ and $\widehat F$ differ by at most $\delta := 4d\epsilon/\tr(C)$ at every index, then the two threshold-crossings can differ only at indices where $F$ is within $\delta$ of $\rho$. (Formally: if $|F(j)-\rho| > \delta$, then $F(j) > \rho + \delta \Rightarrow \widehat F(j) > \rho$, or $F(j) < \rho - \delta \Rightarrow \widehat F(j) < \rho$; either way $F$ and $\widehat F$ agree on ``reached threshold at $j$'' at that index.) So the difference $|\EffDim(\widehat C) - \EffDim(C)|$ is bounded by the number of indices $j$ within $|F(j) - \rho| \leq \delta$.

\end{proof}

\section{Experimental Details}
\label{app:experimental}

\paragraph{Hardware.} Trajectory collection and analysis were run on NVIDIA A100 GPUs 
(40\,GB HBM2) via a SLURM cluster. Collection jobs used 16\,GB 
of CPU RAM per job; effective dimension analysis used 24\,GB; 
correctness prediction ran on CPU-only nodes with 32\,GB of RAM.

\paragraph{Hyperparameters.}
\begin{itemize}
    \item Temperature: $T = 0.7$
    \item Maximum tokens: 800
    \item Runs per question: 10--15
    \item Minimum tokens for valid trajectory: 30
    \item $\rho$ values: $\{0.90, 0.95, 0.99\}$
\end{itemize}

\paragraph{Difficulty Prediction Classifier Details.}
\begin{itemize}
    \item \textbf{Logistic Regression}: $C = 1.0$, max iterations = 1000
    \item \textbf{MLP}: Hidden layers $(32, 16)$, early stopping with 15\% validation, $\ell_2$ regularization $\alpha = 0.01$
\end{itemize}

\paragraph{Correctness Prediction Classifier Details.}
\begin{itemize}
    \item \textbf{Logistic Regression}: $C = 0.1$, max iterations = 1000, balanced class weights
    \item \textbf{MLP}: Hidden layers $(64, 32)$, early stopping with 15\% validation, $\ell_2$ regularization $\alpha = 0.01$, balanced sample weights
    \item \textbf{GRU}: 2-layer, hidden dim 64, dropout 0.3, Adam optimizer 
(lr=$1e-3$, weight decay=$1e-4$), 30 epochs, batch size 32, 
BCE loss with positive class weighting, gradient clipping norm $1.0$
\end{itemize}

\paragraph{Prompts.}
We use three chain-of-thought prompting styles. The \textbf{medium} and \textbf{long} styles
are pooled for all experiments reported in the main paper. The \textbf{short} style is included
for completeness.

\vspace{1em}

\begin{tcolorbox}[
    colback=gray!5, colframe=gray!50, arc=4pt,
    title={\textbf{Short CoT Prompt}},
    fonttitle=\bfseries
]
\textit{Go with your instinct. Write only the essential steps -
no extra explanation. Answer in \texttt{\textbackslash boxed\{\}}.}
\end{tcolorbox}

\vspace{0.8em}

\begin{tcolorbox}[
    colback=gray!5, colframe=gray!50, arc=4pt,
    title={\textbf{Medium CoT Prompt}},
    fonttitle=\bfseries
]
\textit{Think step by step. Show your reasoning, then give
the final answer in \texttt{\textbackslash boxed\{\}}.}
\end{tcolorbox}

\vspace{0.8em}

\begin{tcolorbox}[
    colback=gray!5, colframe=gray!50, arc=4pt,
    title={\textbf{Long CoT Prompt}},
    fonttitle=\bfseries
]
\textit{Overthink this. Consider multiple approaches.
Solve step by step, then second-guess yourself.
Check your work using a different method.
Ask: what could I be missing? What if I made an error?
Keep thinking until you're fully satisfied.
Answer in \texttt{\textbackslash boxed\{\}}.}
\end{tcolorbox}

\end{document}